\def\etal{\emph{et al}\bmvaOneDot}
\newtheorem{theorem}{Theorem}[section]
\newtheorem{lemma}[theorem]{Lemma}
\newenvironment{proof}{\noindent \emph{Proof.} }{\hfill $\Box$}
\newcommand{\db}{\mathbf{d}}
\newcommand{\G}{\mathbf{G}}
\newcommand{\D}{\mathbf{D}}
\newcommand{\V}{\mathbf{V}}
\newcommand{\U}{\mathbf{U}}
\newcommand{\Sm}{\mathbf{S}}
\newcommand{\W}{\mathbf{W}}
\newcommand{\x}{\mathbf{x}}
\newcommand{\m}{\mathbf{m}}
\newcommand{\tr}{\mathbf{t}}
\newcommand{\I}{\mathbf{I}}
\newcommand{\w}{\mathbf{w}}
\newcommand{\A}{\mathbf{A}}
\newcommand{\M}{\mathbf{M}}
\newcommand{\bb}{\mathbf{b}}
\newcommand{\bd}{\mathbf{d}}
\newcommand{\B}{\mathbf{B}}
\newcommand{\T}{\mathrm{T}}
\newcommand{\proj}{\mathscr{P}}
\begin{document}

\title{Non-Rigid Structure-From-Motion by Rank-One Basis Shapes}

\author{First Author\\
Institution1\\
Institution1 address\\
{\tt\small firstauthor@i1.org}
\and
Second Author\\
Institution2\\
First line of institution2 address\\
{\tt\small secondauthor@i2.org}
}

\maketitle

\begin{abstract}
    In this paper, we show that the affine, non-rigid structure-from-motion problem can be solved by rank-one, thus degenerate, basis shapes. 
    It is a natural reformulation of the classic low-rank method by Bregler \etal, 
    where it was assumed that the deformable 3D structure is generated by a linear combination of rigid basis shapes. The non-rigid shape will be decomposed into the mean shape and the degenerate shapes, constructed from the right singular vectors of the low-rank decomposition. The right singular vectors are affinely back-projected into the 3D space, and the affine back-projections 
    will also be solved as part of the factorisation. 
    By construction, a direct interpretation for the right singular vectors of the low-rank decomposition will also follow: they can be seen as principal components, hence, the first variant of our method is referred to as Rank-1-PCA.
    The second variant, referred to as Rank-1-ICA, additionally estimates the orthogonal transform which maps the deformation modes into as statistically independent modes as possible. It 
    has the advantage of pinpointing statistically dependent subspaces related to,
    for instance, lip movements on human faces. 
    Moreover, in contrast to prior works, no predefined dimensionality for the subspaces is imposed.
    The experiments on several datasets show that the method achieves better results than the state-of-the-art, it can be computed faster, and it provides an intuitive interpretation for the deformation modes. 
\end{abstract}

\section{Introduction}

Non-rigid structure-from-motion (NRSFM), the problem of reconstructing both the scene geometry and dynamic, deforming object structure, is a classic problem in computer vision. NRSFM in general is a difficult problem, although there have been significant developments in the last two decades. The starting point for NRSFM can be seen as the work by Bregler \etal~\cite{Bregler00}, who proposed a low-rank approach, where the underlying assumption is that the deformable 3D shape can be represented as a linear combination of rigid 3D basis shapes. This leads to a matrix factorisation problem that
can be seen as the generalisation of the classic Tomasi--Kanade~\cite{Tomasi92} factorisation. 
One crucial characteristic of the classic NRSFM problem is the fact that the decomposed motion matrix has a block-form structure due to the assumption of 3D basis shapes. It was found out that the general solution needs to tackle with the inherent geometric and structural ambiguities of the problem that has been challenging to date.

There have been numerous approaches to address the NRSFM problem. The majority of  previous works have assumed a calibrated affine camera 
and utilised the well-known orthogonality constraints for camera matrices. Additional constraints used include heuristic deformation minimisation \cite{Brand01}, constraints arising from stereo rig \cite{DelBue04}, shape basis fixation on certain frames \cite{Xiao06}, and factoring a multifocal tensor \cite{Hartley08}. Physical and temporal priors have also been widely used such as those for rigidity \cite{DelBue06,Bartoli08}, camera trajectory smoothness \cite{Gotardo11}, temporal smoothness \cite{Torresani08,Akhter09}, and deformation \cite{Brand01,DelBue12}.
The problem has alternatively been viewed as manifold learning \cite{DelBue12} that has naturally led to alternation-based optimisation \cite{Torresani01,Paladini12,Torresani08}. In addition, Bartoli \etal \cite{Bartoli08}  proposed a coarse-to-fine solution that uses  information on several scales to regularise the solution. There have also been uncalibrated approaches \cite{Brandt09,Brandt18} that assume statistical independence of the shape bases to solve the structural and geometric ambiguities. Dai \etal \cite{Dai12} completely ignored the structural ambiguity by using the observation that the reconstruction is not ambiguous unlike the shape basis.

\begin{figure}[tb]
    \centering
    \includegraphics[width=0.8\textwidth,trim={0cm 0.5cm 4cm 1.9cm},clip]{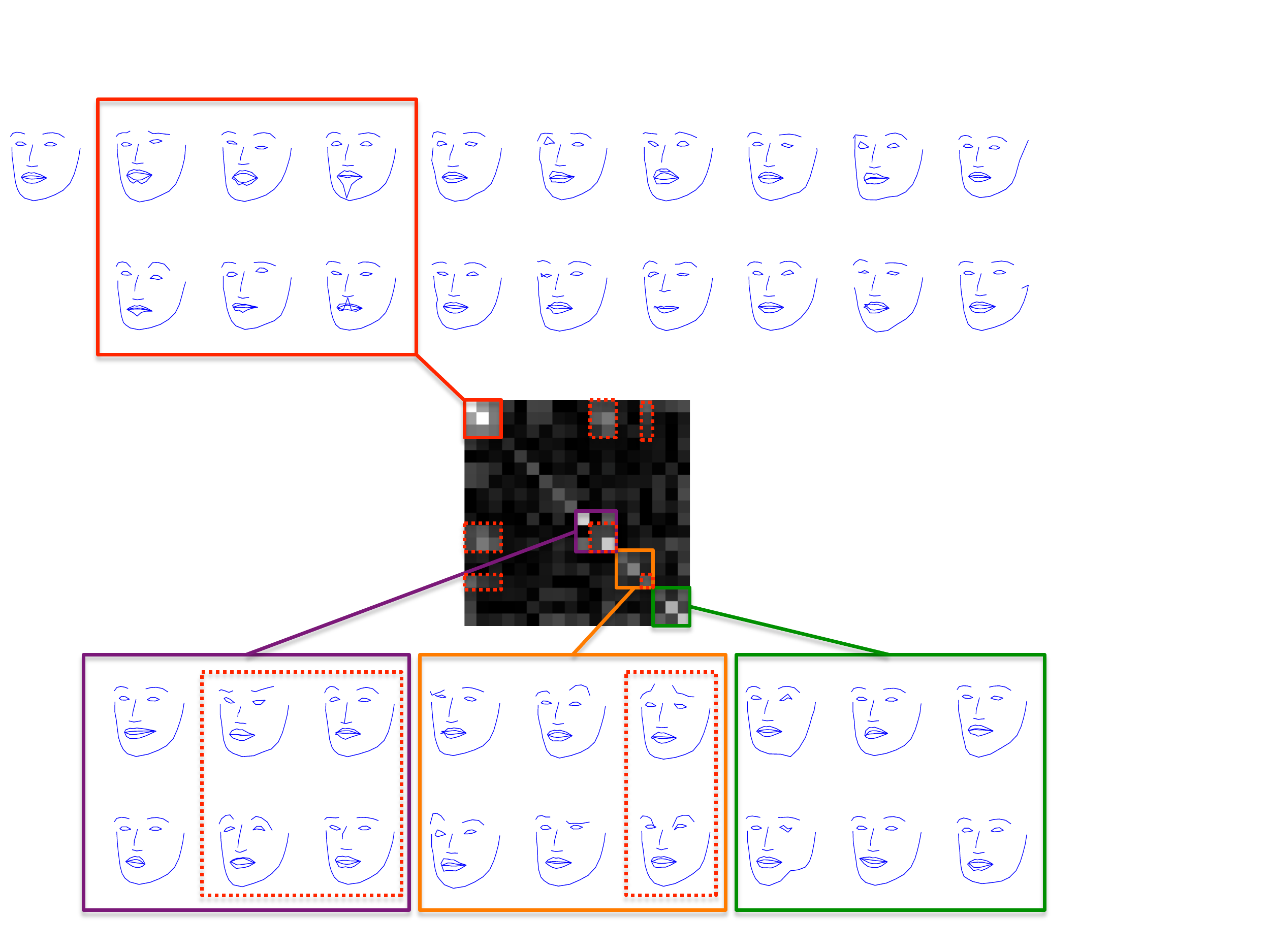}
    \caption{In this work, the basis shapes are rank-$1$ shapes $\B_k$, here illustrated as two sided perturbation $\B=\B_0 \pm \alpha \B_k$ from the mean rigid shape $\B_0$ (left), where the shape basis have been constructed to be as statistically independent as possible. The coefficient covariance matrix $\mathbf{C}_\alpha$ reveals the statistically dependent subspaces such as those related to lip movements (red), asymmetric eyebrow movements (orange), and face size deformations (green).}
    \label{fig:motivation}
\end{figure}

This work reformulates the non-rigid, low-rank model. 
Instead of trying to solve the harder problem of finding the underlying 3D shape basis, we individually analyse the singular vectors that form the shape matrix  and back-project them onto 3D to create degenerate, rank-$1$ shapes.  
In other words, the 3D shapes will be modelled as a decomposition, where the mean shape is perturbed by one-dimensional components, i.e., the shape approximation is updated in one direction at a time, instead of updating the shape with rank-$3$ shape components. Our approach has substantial advantages. First, it is a natural utilisation of the SVD since each singular vector is separately used, in the order of significance, to update the 3D shape approximation. Second, no block-form structure needs to be enforced as the structural property will be automatically addressed by the rank-$1$ shapes. Third, without a loss of generality, we can even orthogonally transform the $K$-dimensional singular space in order to cluster the rank-$1$ shapes into subspaces of arbitrary dimension smaller than $K$ so that the resulting subspaces are as independent as possible from each other. The components within a subspace are statistically dependent, as Fig.~\ref{fig:motivation} illustrates.   

The organisation of this paper is as follows. In Section~\ref{sec:degenerate}, we describe the mathematical model and in Section~\ref{sec:factorisation} the factorisation model is presented. In Section~\ref{sec:ICA}, we show how to make an orthogonal transform to the $K$-dimensional singular space that makes the rank-$1$ components as statistically independent as possible. The recovery of the rank-$1$ shape basis is described in Section~\ref{sec:recovery}. In Section \ref{sec:experiments}, we report our experiments and we conclude in Section~\ref{sec:conclusions}.

%
%

\section{Method} \label{sec:method}

\subsection{Degenerate Basis Shapes Model} \label{sec:degenerate}

In contrast to the standard factorisation model which is based on $3$-dimensional basis shapes, we assume that the basis shapes are degenerate, rank-$1$ shapes. In effect, the 3D shapes are represented as $\x^i_j=\bb_{0j} + \sum_{k=1}^K \alpha_k^i \bb_{kj}$, where $\x^i_j=\bb_{0j}$ is the rigid, mean shape, $\alpha_k^i$ is a scalar and $\mathrm{rank} (\bb_{k1}\ \bb_{k2}\ \cdots \bb_{kJ})=1$ for $k \neq 0$. 
The 2D projection $\hat{\m}_j^i$ of the 3D point $\x^i_j$ is thus
\begin{equation}
  \hat{\m}_j^i= \M^i \x^i_j  +{\tr}^i = \M^i \left(\bb_{0j} + \sum_{k=1}^K  \alpha_k^i \bb_{kj}  \right) +{\tr}^i,
\end{equation}
where $\M^i$ is a $2\times3$ projection matrix to the image $i$ and $\tr^i$ is the corresponding translation vector. 

The maximum likelihood solution with respect to the parameters ${\M^i,\tr^i,\alpha_k^i,\bb_{kj}}$, $i=1,\ldots,I$, $j=1,\ldots,J$, $k=1,\ldots,K$, 
with Gaussian noise model, minimises the squared loss  
%
\begin{equation}
  E=\sum_{i,j} \| \hat{\m}_j^i-\m_j^i \|^2 \equiv \| \W-\hat{\W} \|_\mathrm{Fro}^2.  \label{eq:fro}
\end{equation}
Here, the translation corrected measurements $\m_j^i-\hat{\tr}^i$, $\hat{\tr}^i=\frac{1}{J} \sum_j \mathbf{m}_j^i$, are collected into the matrix $\W$, 
so that
\begin{equation}
{\W}\simeq\underset{\triangleq \M}{\underbrace{
    \begin{pmatrix}
       \M^1  &  \alpha_1^1 \M^1  & \cdots &  \alpha^1_K \M^1 \\
       \M^2 &  \alpha_1^2 \M^2 & \cdots &  \alpha^2_K \M^2 \\
      \vdots          & \vdots          & \ddots &  \vdots\\
       \M^I &  \alpha_1^I \M^I & \cdots &  \alpha^I_K \M^I \\          
    \end{pmatrix} }}\underset{\triangleq \B}{\underbrace{
    \begin{pmatrix}
      \B_0\\
      \B_1\\
      \vdots\\
      \B_K
    \end{pmatrix}}}, \label{eq:factorisedform}
\end{equation}
where $\B_k=\left( \bb_{k1} \ \bb_{k2}\ \cdots \ \bb_{kJ} \right)$, $\B_0$ is the rigid shape, $\mathrm{rank}(\B_0)\leq 3$, and $\mathrm{rank}(\B_k)=1$, $k \neq 0$. 
Hence, the noise free measurement
matrix has the rank constraint $\mathrm{rank} (\hat{\W}) \leq K+3$.\footnote{In the classic low-rank factorisation model $\mathrm{rank} (\hat{\W}) \leq3K+3$.} Without a loss of generality, we additionally require that $\| \B_k \|_\mathrm{Fro}=1$ for all $k \neq 0$.

\subsection{Factorisation}
\label{sec:factorisation}

 %
 The best rigid affine reconstruction along with the inhomogenous camera matrices is obtained by the standard Tomasi-Kanade factorisation \cite{Tomasi92}. That is, we factorise the translation corrected measurement matrix $\W$ by singular value decomposition and truncate all the singular values, and singular vectors, up to the three largest 
\begin{equation}
    \W_0 = \M_0 \B_0. 
    \label{eq:fac.tomasi}
\end{equation}
The inhomogeneous projection matrices, up to an affine transform, 
are $\M_0=\frac{1}{\sqrt{J}}\U_0 \Sm_0$ and the mean rigid shape is $\mathbf{B}_0 
= \sqrt{J} \V_0^\T$. 
%
%
We then subtract the rigid component from the measurement matrix
\begin{equation}
    \Delta \W = \W - \W_0, \label{eq:nonrigid.part}
\end{equation}
and continue with the non-rigid part $\Delta \W$.

 
 
For the non-rigid part, the remaining constraint is $\mathrm{rank} (\Delta \W) = K$. We hence truncate all the singular values, and singular vectors, up to the $K$ largest 
\begin{equation} 
  \Delta \W \approx \Delta \tilde{\W} \equiv {\U}' {\Sm}' {\V}'^\T = {\M}' {\B}',\label{eq:svd}
\end{equation}
where ${\M}'=\frac{1}{\sqrt{J}}{\U}' {\Sm}'$ and  ${\mathbf{B}}'= \sqrt{J} {\V}'^\T$. The remaining problem is to find the $3K\times K$ operator $\A$ so that $\B=\A \B'$ and $\M=\M' \A^\dagger$ corresponding to (\ref{eq:factorisedform}). 
Since $\B'$ has $K$ linearly independent rows, $\A$ can be written in the form $\A=\D \G$, where $\D$ is block diagonal matrix with $3\times1$ blocks $\db_k$, and $\G$ is an orthogonal matrix.  

The selection of the orthogonal transformation $\G$ is an additional freedom arising from the rank-$1$ decomposition. Setting $\G$ into $\I$ principally corresponds to doing Principal Component Analysis with degenerate shapes (see Fig.~\ref{fig:faceModes}a,c). 
In the following, we will refer to this procedure as {Rank-$1$-PCA}. Although no grouping of the rank-one components is strictly necessary, we also consider an alternative way of estimating the rank-$1$ shapes by setting $\G$ so that they are as \emph{statistically independent factors} as possible. This will allow us to analyse statistically linked shape components, such as lip movements, by isolating them from the other deformations (Fig.~\ref{fig:faceModes}b,d).
We will refer to this procedure as {Rank-$1$-ICA.}

\begin{figure*}[tb]
\begin{center}
\subfigure[]{\includegraphics[width=0.47\textwidth, trim={4cm 3.8cm 5cm 0.5cm},clip]{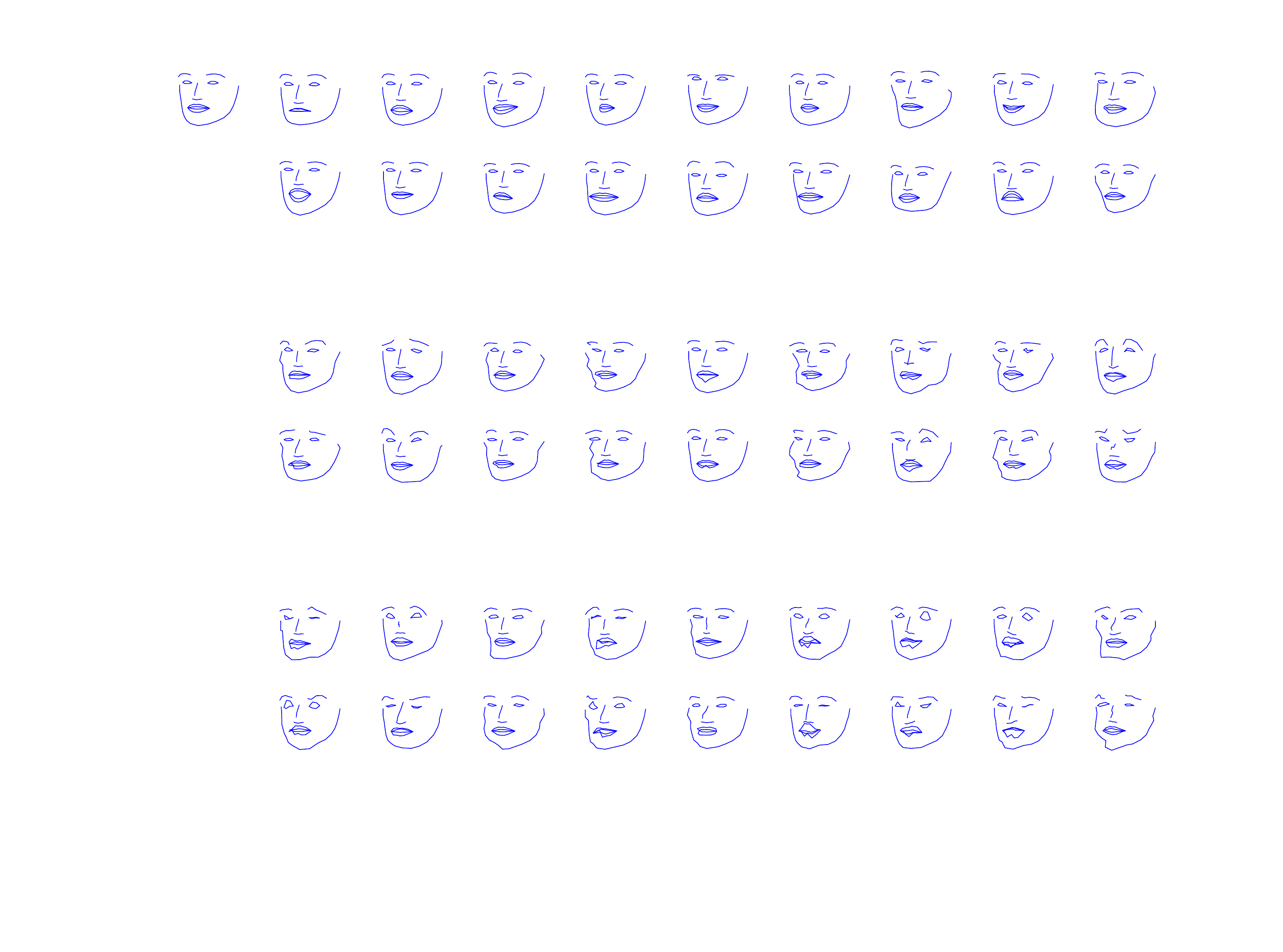}}
\subfigure[]{\includegraphics[width=0.47\textwidth, trim={4cm 3.8cm 5.5cm 0.5cm},clip]{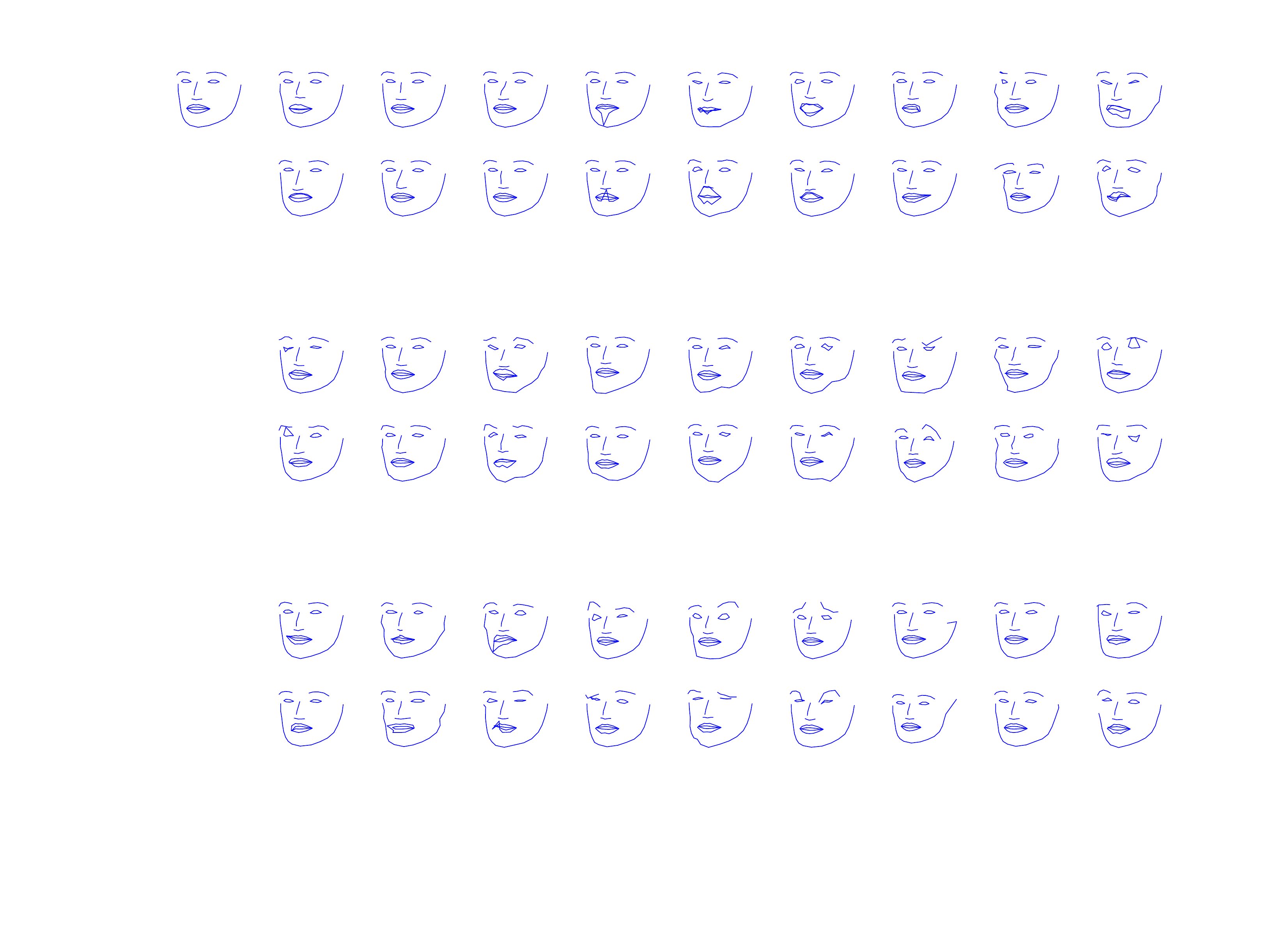}}\\
\subfigure[]{\includegraphics[width=0.15\textwidth]{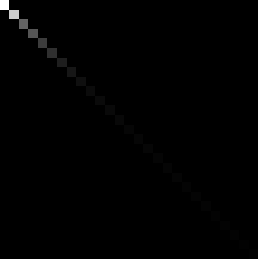}} \hspace{4cm} \subfigure[]{\includegraphics[width=0.15\textwidth]{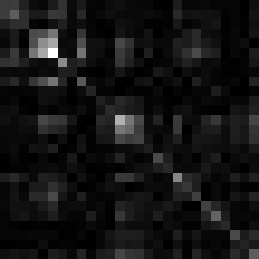}}\\
\end{center}
\caption{Rank-$1$ shape basis decomposition for LS3D-W data set with $K=27$. (a) The rigid affine 3D shape $\B_0$, and the $K$ estimated 3D Rank-$1$-PCA basis shapes $\B=\B_0 \pm \alpha_k \hat{\B}_k$, where $\alpha_k$ is a positive scalar. The components have been ordered with decreasing variance. (b) Correspondingly, the rigid affine 3D shape and the $K$ estimated 3D Rank-$1$-ICA basis shapes. (c) The $K\times K$ covariance matrix of the Rank-$1$-PCA basis shape coefficients that illustrates the fact that the shape coefficients are uncorrelated. (d) The covariance matrix of the Rank-$1$-ICA basis shape coefficients. The high off-diagonal covariance suggests that the two rank-$1$ shapes belong to a statistically dependent subspace. To facilitate inspection, the components have been permuted so that the covariance is concentrated around the main diagonal.}\label{fig:faceModes}
\end{figure*}

\subsection{Independent Component Analysis}
\label{sec:ICA}

The independent component analysis (ICA) is a general method for blind source separation that intends to decompose the underlying signals into statistically independent factors by using higher order statistics of multidimensional observations characterised by the random vector $\mathbf{Z}$. ICA can be defined as the minimisation of mutual information 
\begin{equation}
    I(\mathbf{Z})=\sum_{j} H(Z_j) - H(\mathbf{Z}),
\end{equation}
where $H$ refers to differential entropy and $\mathbf{Y} =\A_{\mathrm{ICA}} \mathbf{Z}$ to a random vector corresponding to the columns in $\B'$. If the vectors are mean centred and white, it implies that the 
mixing matrix $\A_{\mathrm{ICA}}=\G^\T$ will be an orthogonal matrix, hence, 
%
%
%
\begin{equation}
 {\B}_{\mathrm{ICA}} \equiv   
 \A_{\mathrm{ICA}}^\T {\B}' = \G {\B}'.
 \end{equation}
where
the rows of ${\B}_{\mathrm{ICA}}$ will be in as statistically independent as possible. Here, we compute the orthogonal, separation matrix $\G$ by the FastICA algorithm \cite{Hyvarinen97}.

\subsection{Recovery of Degenerate Basis Shapes} \label{sec:recovery}

Let $\bb_k$ denote the $k$th row in $\B_{\mathrm{PCA}} \equiv \B'$ or $\B_{\mathrm{ICA}}$, depending whether the rank-$1$ PCA or ICA model is selected, respectively. We are searching for the solution to the problem
\begin{equation}
    \min_{\bd,\alpha} \sum_i \| \Delta \W^i - \sum_k \alpha_k^i \B_k^i \|^2_\mathrm{Fro}, \label{eq:problem}
\end{equation}
subject to $\|\B_k^1\|_\mathrm{Fro}=1$, for all $k$, where $\B_k^i=\M^i \bd_k \bb_k^\mathrm{T}$ are the rank-$1$ operators referring to the degenerate basis shapes, and $\alpha_k^i$ are the corresponding basis coefficients that can be computed by orthogonally projecting the differential measurement matrix blocks $\Delta \W^i$ onto the rank-$1$ operators. We first note a useful property, stated as follows.
\begin{lemma}
  $\B_k^i \perp \B_{k'}^{i'}$ in the operator inner product, $k \neq k'$. \label{lem:op}
\end{lemma}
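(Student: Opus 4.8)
The plan is to unfold the operator inner product of the two rank-one operators as a Frobenius (trace) inner product and use the cyclic invariance of the trace to peel off a scalar factor equal to the inner product of two rows of the recovered shape matrix. Since those rows are mutually orthogonal, that factor --- and hence the whole expression --- must vanish whenever $k \neq k'$.

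Concretely, recalling that $\langle \A, \C \rangle = \mathrm{tr}(\A^\T \C)$ for matrices of equal size and substituting $\B_k^i = \M^i \bd_k \bb_k^\T$ and $\B_{k'}^{i'} = \M^{i'} \bd_{k'} \bb_{k'}^\T$, I would write
\begin{equation*}
  \langle \B_k^i, \B_{k'}^{i'} \rangle
  = \mathrm{tr}\!\big( \bb_k\, \bd_k^\T (\M^i)^\T \M^{i'} \bd_{k'}\, \bb_{k'}^\T \big)
  = \big( \bb_{k'}^\T \bb_k \big)\big( \bd_k^\T (\M^i)^\T \M^{i'} \bd_{k'} \big),
\end{equation*}
where the second equality uses the cyclic property of the trace together with the fact that both $\bb_{k'}^\T \bb_k$ and $\bd_k^\T (\M^i)^\T \M^{i'} \bd_{k'}$ are scalars. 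It therefore suffices to show that $\bb_{k'}^\T \bb_k = 0$ for $k \neq k'$.

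For this last step I would treat the two models separately. In the PCA case the $\bb_k$ are the rows of $\B' = \sqrt{J}\,\V'^\T$, so $\B' \B'^\T = J\, \V'^\T \V' = J\, \I$ because the columns of $\V'$ are orthonormal; hence distinct rows of $\B'$ are orthogonal. In the ICA case the $\bb_k$ are the rows of $\B_{\mathrm{ICA}} = \G \B'$ with $\G$ orthogonal, whence $\B_{\mathrm{ICA}} \B_{\mathrm{ICA}}^\T = \G\,(J\,\I)\,\G^\T = J\, \I$, so distinct rows are again orthogonal. Either way $\bb_{k'}^\T \bb_k$ vanishes for $k \neq k'$, which finishes the argument. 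The computation itself is short; the only point that genuinely needs care is the ICA case, where row-orthogonality is not built in by construction but has to be deduced from the orthogonality of the FastICA separation matrix $\G$ and the identity $\B' \B'^\T = J\, \I$. It is also worth noting that with $k = k'$ the same identity shows the inner product need \emph{not} vanish across different frames $i, i'$, which is exactly why in (\ref{eq:problem}) the coefficients decouple over the index $k$ but remain coupled over $i$ --- the structural property the lemma is there to supply.
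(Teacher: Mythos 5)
Your proof is correct and follows essentially the same route as the paper's: both factor the operator inner product of the two rank-one matrices into the scalar product $\langle \M^i \bd_k, \M^{i'}\bd_{k'}\rangle\,\langle \bb_k,\bb_{k'}\rangle$ (you via the trace and its cyclic invariance, the paper via vectorisation) and then invoke $\bb_k \perp \bb_{k'}$. Your explicit verification that row-orthogonality survives the orthogonal ICA transform $\G$ is a detail the paper leaves implicit, but it does not change the argument.
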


\begin{proof} 
We may write
\begin{equation}
\begin{split}
\langle \B^i_k, \B_{k'}^{i'} \rangle &= \langle \mathrm{vec}\{\B^i_k\}, \mathrm{vec}\{\B_{k'}^{i'}\} \rangle = 
\langle \M^i \bd_k,\M^{i'} \bd_{k'} \rangle \langle \bb_k, \bb_k' \rangle, 
\end{split}
\end{equation}
which vanishes for $k \neq k$' since $\bb_k \perp \bb_k'$.
\end{proof}

Now, we are ready to show how we minimise (\ref{eq:problem}). It is a non-linear minimisation over the unknowns $\bd_k$, which define the affine mapping of the rank-$1$ component $k$ onto the object coordinate frame, as stated by the following theorem. 

\begin{algorithm}[tb]
  \begin{enumerate}
      \item Form the translation corrected measurement matrix $\W$, as in (\ref{eq:factorisedform}).
      \item Decompose $\W$ into the rigid $\W_0$ and non-rigid $\Delta \W$ part as in (\ref{eq:fac.tomasi}) and (\ref{eq:nonrigid.part}), respectively. 
      \item Factorise the non-rigid part as $\Delta \W = \M' \B'$, where $\M' = \frac{1}{\sqrt{J}} \U \Sm$ and $\B' = \sqrt{J} \V^\T$.
      \item Do either
      \begin{enumerate}
        \item Compute the PCA basis by assuming $\G=\I$ and so that $\B_\mathrm{PCA}=\B'$; or
        \item Find the orthogonal transformation $\G$ and ICA basis by FastICA  \cite{Hyvarinen97} so that $\B_\mathrm{ICA}=\G \B'$.
      \end{enumerate}
      
      \item Find the component affine back-projections $\bd_k$, $k=1,2,\ldots,K$, by minimising (\ref{Eq:max.prob}). 
      \item Form the rank-$1$ basis shapes $\B_k^i=\M^i_0 \bd_k \bb_k^\T$, $i=1,2,\ldots,I$, where $\bb_k^\T$ is the $k$th row of $\B_\mathrm{PCA}$ or $\B_\mathrm{ICA}$, $k=1,2,\ldots,K$. 
      \item Solve the basis coefficients by orthogonal projection $\alpha_k^i=\langle \Delta \W^i, \B^i_k \rangle / \sqrt{\langle \B^i_k, \B^i_k \rangle}$, $i=1,2,\ldots,I$, $k=1,2,\ldots,K$.
  \end{enumerate}
  \caption{Non-rigid Structure From Motion by Rank-$1$ Basis Shapes} \label{alg:ours}
\end{algorithm}

\begin{theorem}
  The minimisation problem (\ref{eq:problem}) is equivalent to the set of 
  maximisation problems
  \begin{equation}
     \max_{\bd_k} \sum_i \frac{  \bd_k^\T (\bb_k \otimes \M^i)^\T  \w^i \w_i^\T (\bb_k \otimes \M^i) \bd_k }{  \bd_k^\T (\bb_k \otimes \M^i)^\T   (\bb_k \otimes \M^i) \bd_k }  
  \label{Eq:max.prob}
  \end{equation}
  subject to $\| (\bb_k \otimes \M^1) \bd_k \|=1$, for all $k \neq 0$, where 
the symbol $\otimes$ indicates the Kronecker product. 
\end{theorem}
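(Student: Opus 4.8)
The plan is to eliminate the coefficients $\alpha_k^i$ in closed form by exploiting the operator orthogonality of Lemma~\ref{lem:op}, and then to rewrite the reduced objective in vectorised form via the Kronecker identity for $\mathrm{vec}(\M^i\bd_k\bb_k^\T)$.

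First I would fix the back-projections $\bd_k$, hence all operators $\B_k^i=\M^i\bd_k\bb_k^\T$, and minimise over $\alpha$ only. For each fixed $i$ the summands of~(\ref{eq:problem}) are independent, so this is the linear least-squares problem $\min_{\{\alpha_k^i\}_k}\|\Delta\W^i-\sum_k\alpha_k^i\B_k^i\|_\mathrm{Fro}^2$. By Lemma~\ref{lem:op} the operators $\B_1^i,\dots,\B_K^i$ are mutually orthogonal in the Frobenius inner product, so the normal equations are diagonal and the minimiser is the orthogonal projection $\alpha_k^i=\langle\Delta\W^i,\B_k^i\rangle/\langle\B_k^i,\B_k^i\rangle$ (well defined since $\B_k^i\neq 0$ generically), attaining residual $\|\Delta\W^i\|_\mathrm{Fro}^2-\sum_k\langle\Delta\W^i,\B_k^i\rangle^2/\langle\B_k^i,\B_k^i\rangle$. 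Summing over $i$ and discarding the constant $\sum_i\|\Delta\W^i\|_\mathrm{Fro}^2$ turns~(\ref{eq:problem}) into the maximisation of $\sum_k\big(\sum_i\langle\Delta\W^i,\B_k^i\rangle^2/\langle\B_k^i,\B_k^i\rangle\big)$ over $\bd$.

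Next I would note that the $k$-th summand, and the normalisation $\|\B_k^1\|_\mathrm{Fro}=1$, each involve $\bd_k$ alone, so the problem separates into $K$ independent maximisations of $\sum_i\langle\Delta\W^i,\B_k^i\rangle^2/\langle\B_k^i,\B_k^i\rangle$, one per $k$. To bring each into the stated shape, apply $\mathrm{vec}(ABC)=(C^\T\otimes A)\,\mathrm{vec}(B)$ with $A=\M^i$, $B=\bd_k$, $C=\bb_k^\T$ to get $\mathrm{vec}(\B_k^i)=(\bb_k\otimes\M^i)\bd_k$; then, writing $\w^i=\mathrm{vec}(\Delta\W^i)$ and using $\langle X,Y\rangle=\mathrm{vec}(X)^\T\mathrm{vec}(Y)$, one has $\langle\B_k^i,\B_k^i\rangle=\bd_k^\T(\bb_k\otimes\M^i)^\T(\bb_k\otimes\M^i)\bd_k$ and $\langle\Delta\W^i,\B_k^i\rangle^2=\bd_k^\T(\bb_k\otimes\M^i)^\T\w^i(\w^i)^\T(\bb_k\otimes\M^i)\bd_k$, which substituted into the ratio gives exactly~(\ref{Eq:max.prob}), with the constraint becoming $\|(\bb_k\otimes\M^1)\bd_k\|=1$. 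Since each ratio in~(\ref{Eq:max.prob}) is homogeneous of degree zero in $\bd_k$, this constraint does not change the optimal direction; it only fixes the scale so that the recovered $\B_k$ meets the original normalisation.

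The main point, rather than a genuine obstacle, is to make explicit that Lemma~\ref{lem:op} is precisely what collapses the elimination of $\alpha$ to a per-component orthogonal projection — the cross terms $\langle\B_k^i,\B_{k'}^i\rangle$ vanish, the Gram matrices are diagonal, and the $K$ subproblems decouple. The rest is bookkeeping: placing the Kronecker factor $(\bb_k\otimes\M^i)$ and its transpose on the correct sides, and flagging the mild genericity assumption $\B_k^i\neq 0$ needed for the projection step to be well defined.
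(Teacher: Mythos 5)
Your proposal is correct and follows essentially the same route as the paper: you eliminate the coefficients $\alpha_k^i$ by orthogonal projection onto the span of the operators $\B_k^i$ (the paper phrases this as projecting $\Delta\W^i$ onto the orthogonal complement of that span), invoke Lemma~\ref{lem:op} to decouple the $K$ components, and then apply $\mathrm{vec}(\M^i\bd_k\bb_k^\T)=(\bb_k\otimes\M^i)\bd_k$ to reach the Rayleigh-quotient form. Your explicit normal-equations derivation of the residual and the remark on degree-zero homogeneity of the ratio are just more detailed renderings of the same steps.
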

\begin{proof} Let $\mathcal{W}^i$ be the operator subspace spanned by the operators $\B^i_k$, $k=1,2,\ldots,K,$ or, $\mathcal{W}^i=\mathrm{span}(\B^i_1,\B^i_2,\ldots,\B^i_K)$. 
%
%
  The problem (\ref{eq:problem}) is equivalent to the problem
    \begin{equation}
        \min_{\bd} \sum_i \| \proj_{{\mathcal{W}^i}^\perp}\{\W^i\} \|^2: \quad \|\B_k^1\|_\mathrm{Fro}=1 \ \forall k \label{eq:altproblem} 
    \end{equation}  
where $\proj_{\mathcal{W}^\perp}$ denotes the orthogonal projector onto the orthogonal complement of $\mathcal{W}$. Let $\mathcal{W}^i_k$ be the $1$-dimensional operator subspace spanned by the operator $\B^i_k$. 
Using the orthogonality of the operators in the form of Lemma~\ref{lem:op}, (\ref{eq:altproblem}) is equivalent to the set of problems, 
    \begin{equation}
    \begin{split}
        \min_{\bd_k}& \sum_i  \| \proj_{{\mathcal{W}^i_k}^\perp}\{\W^i\} \|^2 : \quad \|\bb_k^1\|=1 \\ &
        \Leftrightarrow \min_{\bd_k} \sum_i \left \| \left( \I - \frac{\bb^i_k {\bb^i_k}^\T}{\| \bb^i_k \|^2} \right)\w^i \right\|^2: \quad \|\bb_k^1\|=1
    \end{split} \label{eq:problemset}
    \end{equation}  
where $\bb^i_k = \mathrm{vec}(\B^i_k) = (\bb_k \otimes \M^i) \bd_k$ and $\w^i = \mathrm{vec}(\W^i)$. Using the the fact that the orthogonal projection is idempotent, (\ref{eq:problemset}) takes the form
\begin{equation}
\begin{split}
    &\min_{\bd_k}  \sum_i {\w^i}^\T \left( \I - \frac{\bb^i_k {\bb^i_k}^\T}{\| \bb^i_k \|^2} \right)\w^i : \quad \|\bb_k^1\|=1 \  \\
    &\Leftrightarrow \max_{\bd_k}  \sum_i {\w^i}^\T \left(  \frac{\bb^i_k {\bb^i_k}^\T}{\| \bb^i_k \|^2} \right)\w^i : \quad \|\bb_k^1\|=1 \\
               &\Leftrightarrow \max_{\bd_k}    \sum_i \frac{\bd_k^\T (\bb_k \otimes \M^i)^\T \w^i {\w^i}^\T (\bb_k \otimes \M^i)\bd_k}{\bd_k^\T(\bb_k \otimes \M^i)^\T (\bb_k \otimes \M^i)\bd_k}  : 
                \hfill \quad \|(\bb_k \otimes \M^1) \bd_k\|=1.
\end{split}
\end{equation}
\end{proof}

Our method is now complete. It is summarised in Algorithm~\ref{alg:ours}.

%

\section{Experiments}
\label{sec:experiments}

We evaluated both variants of the proposed method with several data sets. As reference scores, we used those reported in \cite{Brandt18}. The reference methods are Dai \etal's Pseudoinverse (PI) and Block Matrix Method (BMM) \cite{Dai12}, Kong and Lucey's Priorless decomposition \cite{Kong16}, and Brandt \etal's ISA decomposition \cite{Brandt18}.

\subsection{Shark}

Torressani's synthetic shark \cite{Torresani08} data set is a classic test case. It is a degenerate dataset $(I=240,\ J=91)$ with its rank equal to five after the translation correction. Due to the degeneracy, the 3D reconstruction is not unique but it has a three-parameter-family of solutions even when one uses only one rank-$3$ deformation basis shape. Our method, since built upon the assumption of rank-$1$ basis shapes, is able to exactly match the degree of freedoms of the data by setting $K=2$\footnote{As a proof of the principle, we additionally computed the case $K=3$ corresponding to the experiments \cite{Brandt18} to see that there was no degradation in our result.}. Since our reconstruction is affine, we use the same evaluation metric as in \cite{Brandt18}, i.e., the relative reprojection error or the inverse signal to noise ratio on the image plane. The results are shown in Table~\ref{tab:results}. It is also instructive to see what the $1$-dimensional deformation modes represent. It can be seen (Fig.~\ref{fig:shark}) that though the Rank-$1$-PCA modes are clearly distinctive, the Rank-$1$-ICA modes are much more intuitive as they correspond to mid body movement and the movement of the head and they could be well seen as statistically independent movements.

\begin{table*}[b]
\caption{Comparison of the NRSFM methods measured by the relative reprojection error. 
} \label{tab:results}
\centering
\begin{tabular}{|l|c|c|c|c|c|c|}
\hline
Inverse SNR $[\%]$ & PI \cite{Dai12} & BMM \cite{Dai12} & Priorless  \cite{Kong16} & ISA \cite{Brandt18} & R1-PCA & R1-ICA\\
\hline
Shark & $3.5^\dagger$ & $0.33^\dagger$ & $160^\dagger$ & ${ \bf 0.12}^\dagger$ & ${ \bf 0.12}$ & ${ \bf 0.12}$\\
Balloon & $0.11^\dagger$  & ${\bf 0.012}^\dagger$ & $1.2^\dagger$ & $0.12^\dagger$ & $0.046$ & $0.072$\\
Face LS3D-W & $0.025^\dagger$ & $0.024^\dagger$ & $0.93^\dagger$ & ${0.014}^\dagger$ & 0.011 & $\bf{0.0096}$ \\
\hline
\end{tabular}\\
{\quad \quad \quad \quad \footnotesize $\mbox{}^\dagger$ The score adopted from \cite{Brandt18}.}
\vspace{-4mm}
\end{table*}

\begin{figure}
    \centering
    \subfigure[]{
    \includegraphics[width=0.45\textwidth,trim={1cm 0.6cm 1cm 1cm},clip]{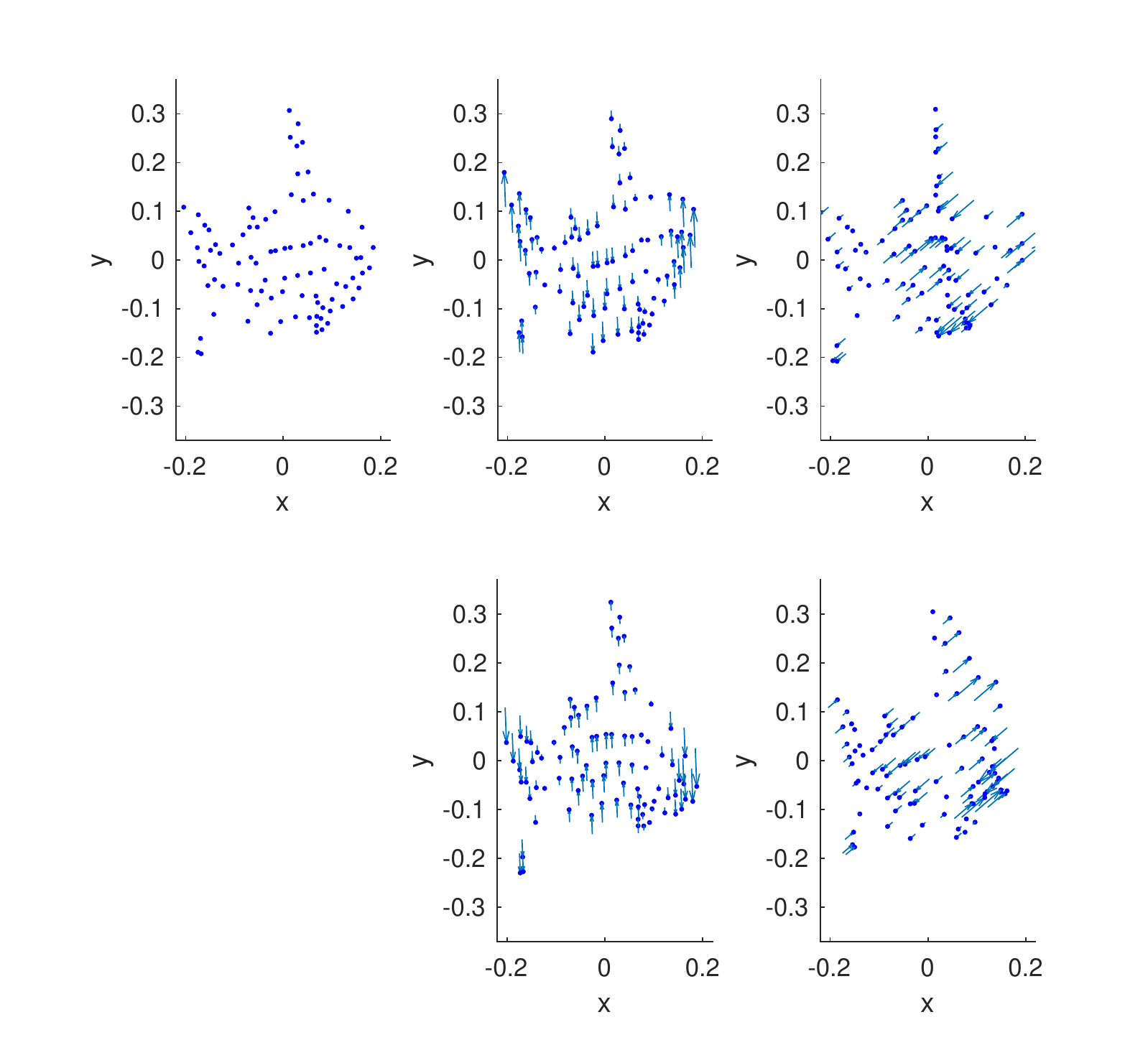}}
    \subfigure[]{
    \includegraphics[width=0.45\textwidth,trim={1cm 0.6cm 1cm 1cm},clip]{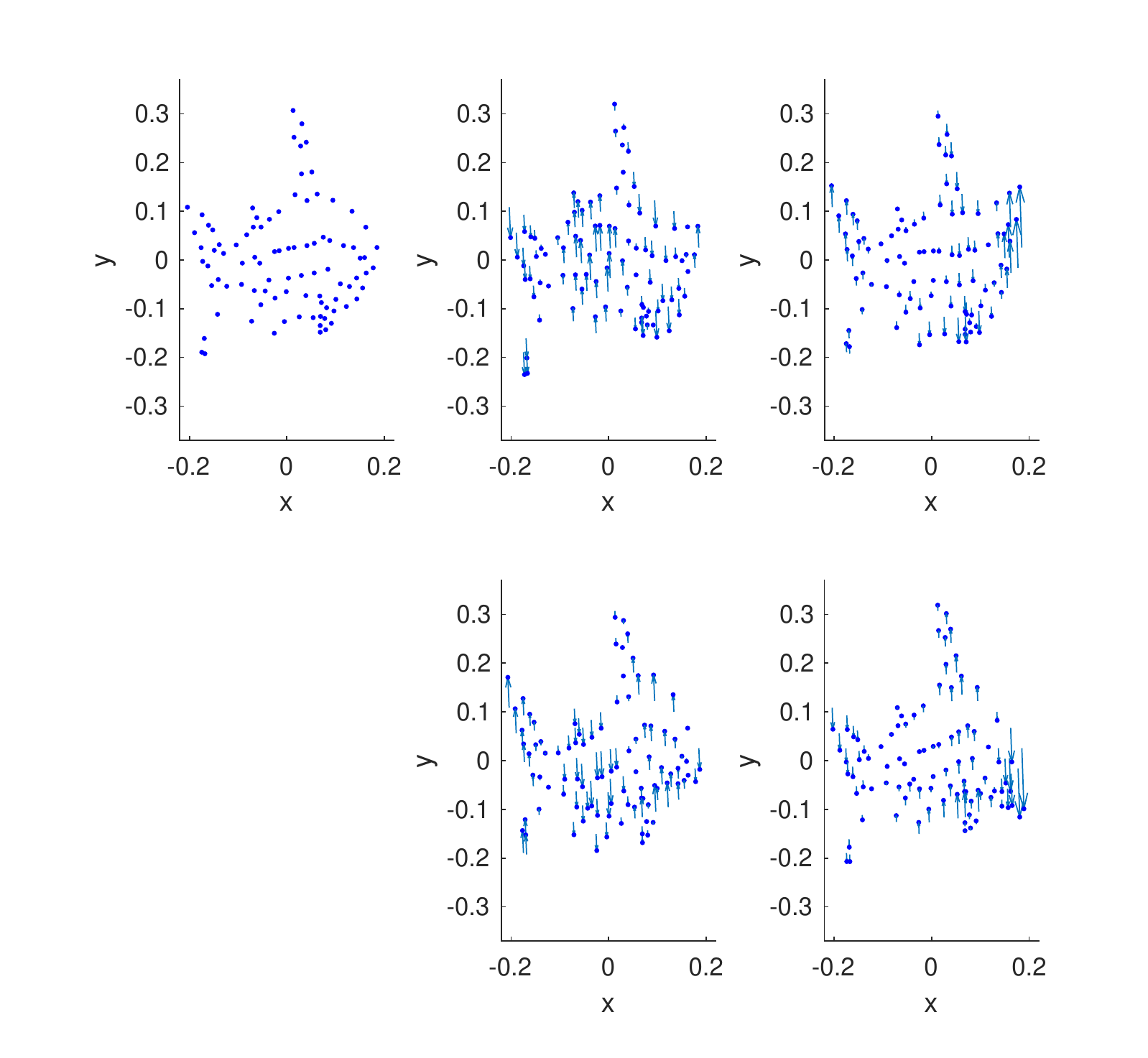}}
    \caption{Torressani's Shark sequence decomposed with $K=2$. (a) Rank-$1$-PCA; from left to right: mean, rigid 3D shape, first deformation mode, second deformation mode; the modes can be interpreted as the body bending and diagonal compression--stretching, respectively. (b) The corresponding Rank-$1$-ICA, where the two deformation modes can be interpreted as the middle body and front body movements with respect to the mean, respectively.}
    \label{fig:shark}
\end{figure}

\subsection{Balloon}

The Balloon dataset is a benchmarking dataset from the NRSFM challenge \cite{Jensen18}, created by simulating perspective reprojections $(I=51)$ of real tracked 3D points ($J=211$). The virtual camera made a circular motion sequence while the 3D ground truth of the dataset is not publicly available. Our method is affine, hence, there is an unknown affine transformation between the affine reconstruction result and the Euclidean 3D structure. To qualitatively show the nature of the deformation modes, we first computed the factorisation on $K=7$ $1$-dimensional deformation modes 
(Fig.~\ref{fig:balloon}). For quantitative evaluation, and to match our result with those reported in \cite{Brandt18}, we set $K=15$ and evaluated the relative reprojection error. The results are in Table~\ref{tab:results}. It can be seen that Rank-$1$-PCA gave the second best better result, slightly better than the Rank-$1$-ICA but the Block Matrix Method (BMM) by Dai \etal \cite{Dai12} performed the best, though it is computationally much more demanding.

\begin{figure}[!tb]
\centering
    \subfigure[]{
    \includegraphics[width=\textwidth,trim={4cm 1.2cm 3cm 1.5cm},clip]{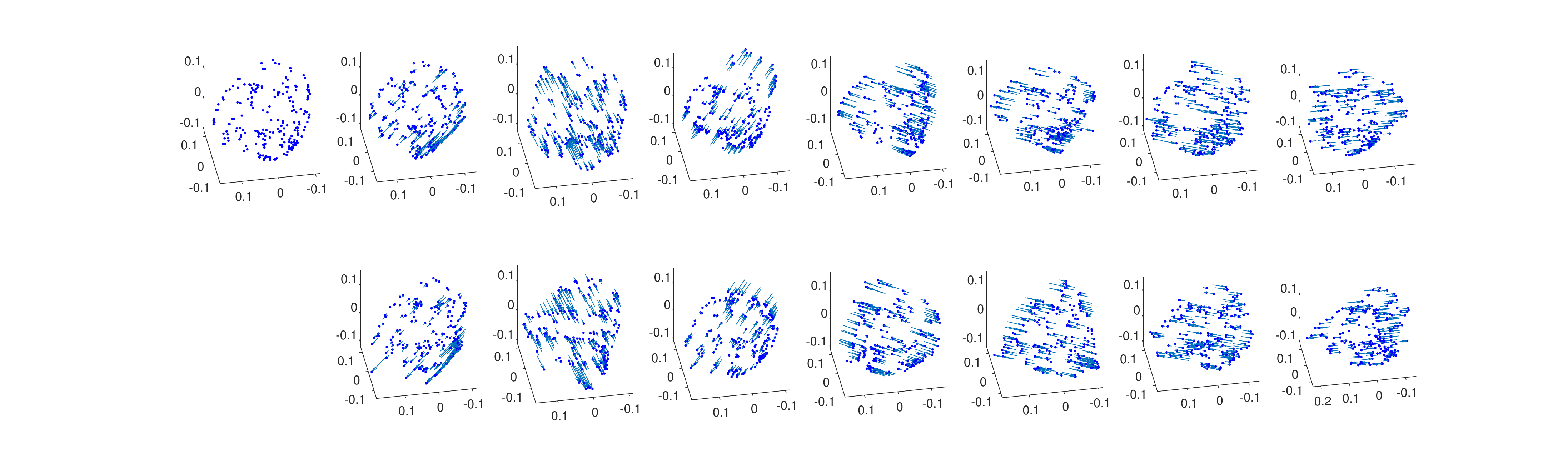}}
    \subfigure[]{
    \includegraphics[width=\textwidth,trim={4cm 1.2cm 3cm 1.5cm},clip]{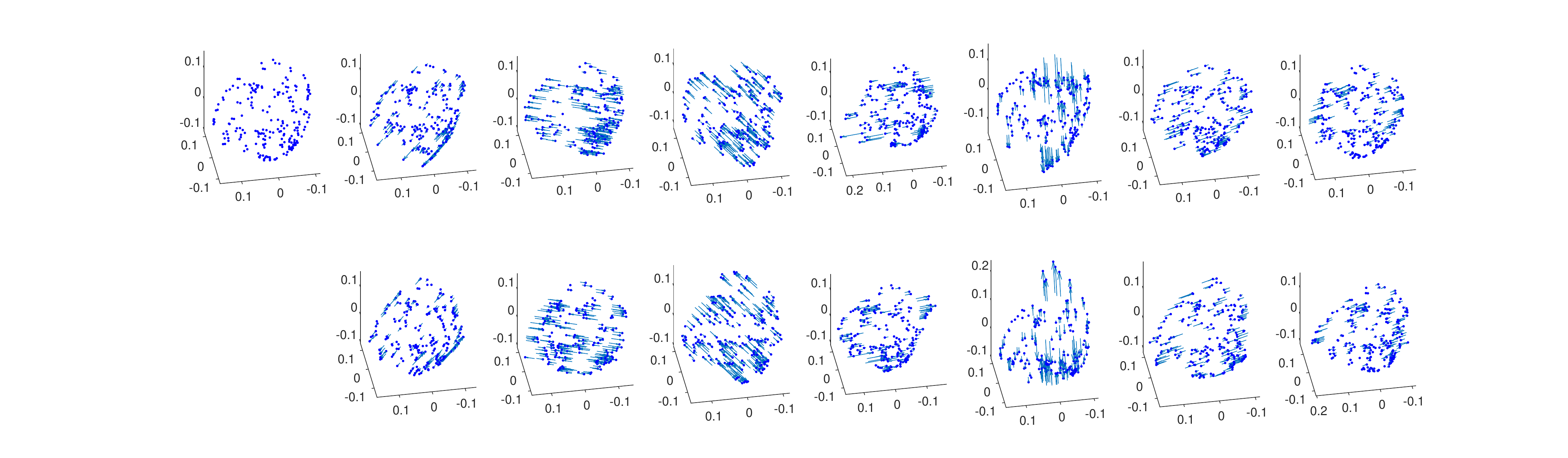}}
    \vspace{-4mm}
    \caption{Balloon deflation decomposition (K=7) onto the mean and (a) Rank-$1$-PCA shapes or (b) Rank-$1$-ICA shapes.} \label{fig:balloon}
%
    \includegraphics[width=\textwidth,trim={4cm 0.5cm 3cm 0cm},clip]{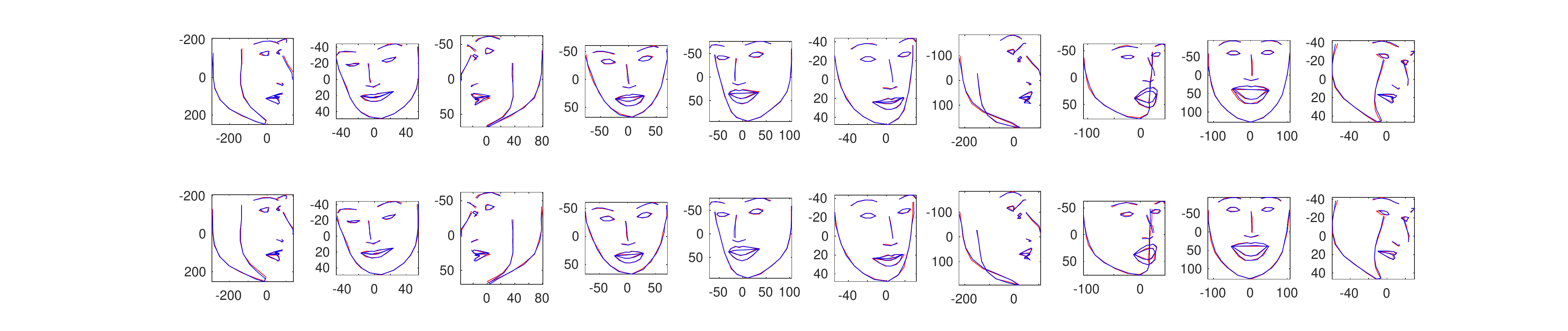}
    \vspace{-5mm}
    \caption{Reprojections onto random faces on the LS3D-W dataset. (top) Rank-$1$-PCA, (bottom) Rank-$1$-ICA (c.f. \cite{Brandt18}). Blue denotes the ground truth, red the reprojection, respectively. Both variants yield an accurate representation on arbitrary head pose and expression.} \label{fig:faceRec}
\end{figure}
 
\subsection{LS3D-W}

Finally, we experimented the LS3D-W dataset \cite{Bulat2017} containing $I=7200$ human face projections, of which $J=68$ 2D matched points were extracted. The data set is challenging due to the fact that the projections are in random orientations. 
The qualitative comparison between the Rank-$1$-PCA and Rank-$1$-ICA is shown in Fig~\ref{fig:faceModes}. 
%
The covariance matrix of Rank-$1$-ICA coefficients reveals the dependent subspaces whereas Rank-$1$-PCA coefficients are by construction uncorrelated. The former is preferred when one intends to investigate 
independent subspaces such as those of lip movements and face size changes. 
The quantitative evaluation (Tab.~\ref{tab:results}, Fig~\ref{fig:faceRec}) shows that the proposed Rank-$1$-ICA method yields the best result and Rank-$1$-PCA the second. 
The computation of the result by the Dai's method takes about 2 CPU days, Kong's and Luceys method about 6 CPU hours, and Brandt's ISA about twenty CPU minutes \cite{Brandt18}. Our methods are able to yield the result in a few minutes thus being the fastest. 

\section{Conclusions} \label{sec:conclusions}

In this paper, we have shown how the non-rigid structure-from-motion problem can be solved by using rank-$1$ shapes. It yields a natural interpretation as a deformation mode is defined by a right singular vector of the classic low-rank factorisation model. This singular vector is back-projected into 3D space into a certain direction, which is solved as part of the factorisation. We proposed two variants of the methods referred to as Rank-$1$-PCA and Rank-$1$-ICA, of which the latter is able to additionally reveal the statistically dependent subspaces among the deformations. Moreover, in contrast to earlier methods, there is no need to enforce the block structure of the motion matrix as it is not required that the right singular vectors are grouped into subgroups of three. This as a clear advantage over the previous formulations, since the realistic structure may contain nested statistical dependencies over arbitrary dimensions that are otherwise difficult to model.



{\small
\bibliography{egbib}

\begin{thebibliography}{21}
\providecommand{\natexlab}[1]{#1}
\providecommand{\url}[1]{\texttt{#1}}
\expandafter\ifx\csname urlstyle\endcsname\relax
  \providecommand{\doi}[1]{doi: #1}\else
  \providecommand{\doi}{doi: \begingroup \urlstyle{rm}\Url}\fi

\bibitem[Akhter et~al.(2009)Akhter, Sheikh, and Khan]{Akhter09}
I.~Akhter, Y.~Sheikh, and S.~Khan.
\newblock In defense of orthonormality constraints for nonrigid structure from
  motion.
\newblock In \emph{IEEE Conference on Computer Vision and Pattern Recognition
  (CVPR)}, pages 1534--1541, June 2009.
\newblock \doi{10.1109/CVPR.2009.5206620}.

\bibitem[Bartoli et~al.(2008)Bartoli, Gay-Bellile, Castellani, Peyras, Olsen,
  and Sayd]{Bartoli08}
A.~Bartoli, V.~Gay-Bellile, U.~Castellani, J.~Peyras, S.~Olsen, and P.~Sayd.
\newblock Coarse-to-fine low-rank structure-from-motion.
\newblock In \emph{2008 IEEE Conference on Computer Vision and Pattern
  Recognition (CVPR)}, pages 1--8, June 2008.

\bibitem[Brand(2001)]{Brand01}
M.~Brand.
\newblock Morphable 3d models from video.
\newblock In \emph{IEEE Computer Vision and Pattern Recognition (CVPR)}, 2001.

\bibitem[Brandt et~al.(2009)Brandt, Koskenkorva, Kannala, and Heyden]{Brandt09}
S.~S. Brandt, P.~Koskenkorva, J.~Kannala, and A.~Heyden.
\newblock Uncalibrated non-rigid factorisation with automatic shape basis
  selection.
\newblock In \emph{IEEE International Conference on Computer Vision Workshops
  (ICCVW)}, pages 352--359, Sept 2009.

\bibitem[Brandt et~al.(2018)Brandt, Ackermann, and Grasshof]{Brandt18}
S.~S. Brandt, H.~Ackermann, and S.~Grasshof.
\newblock Uncalibrated non-rigid factorisation by independent subspace
  analysis, 2018.
\newblock arXiv:1811.09132.

\bibitem[Bregler et~al.(2000)Bregler, Hertzmann, and Biermann]{Bregler00}
C.~Bregler, A.~Hertzmann, and H.~Biermann.
\newblock Recovering non-rigid 3d shape from image streams.
\newblock In \emph{IEEE Conference on Computer Vision and Pattern Recognition
  (CVPR)}, volume~2, pages 690--696, June 2000.

\bibitem[Bulat and Tzimiropoulos(2017)]{Bulat2017}
Adrian Bulat and Georgios Tzimiropoulos.
\newblock How far are we from solving the 2d \& 3d face alignment problem? (and
  a dataset of 230,000 3d facial landmarks).
\newblock In \emph{International Conference on Computer Vision (ICCV)}, 2017.

\bibitem[Dai et~al.(2012)Dai, Li, and He]{Dai12}
Y.~Dai, H.~Li, and M.~He.
\newblock A simple prior-free method for non-rigid structure-from-motion
  factorization.
\newblock In \emph{IEEE Conference on Computer Vision and Pattern Recognition
  (CVPR)}, June 2012.

\bibitem[{Del Bue} and Agapito(2004)]{DelBue04}
A.~{Del Bue} and L.~Agapito.
\newblock Non-rigid 3d shape recovery using stereo factorization.
\newblock In \emph{Asian Conference on Computer Vision (ACCV)}, 2004.

\bibitem[{Del Bue} et~al.(2006){Del Bue}, Llad, and Agapito]{DelBue06}
A.~{Del Bue}, X.~Llad, and L.~Agapito.
\newblock Non-rigid metric shape and motion recovery from uncalibrated images
  using priors.
\newblock In \emph{IEEE Conference on Computer Vision and Pattern Recognition
  (CVPR)}, pages 1191--1198, June 2006.

\bibitem[{Del Bue} et~al.(2012){Del Bue}, Xavier, Agapito, and
  Paladini]{DelBue12}
A.~{Del Bue}, J.~Xavier, L.~Agapito, and M.~Paladini.
\newblock Bilinear modeling via augmented lagrange multipliers.
\newblock \emph{IEEE Transactions on Pattern Analysis and Machine Intelligence
  (TPAMI)}, pages 1496 --1508, August 2012.

\bibitem[Gotardo and Martinez(2011)]{Gotardo11}
P.~F.~U. Gotardo and A.~M. Martinez.
\newblock Computing smooth time trajectories for camera and deformable shape in
  structure from motion with occlusion.
\newblock \emph{IEEE Transactions on Pattern Analysis and Machine Intelligence
  (TPAMI)}, pages 2051--2065, Oct 2011.

\bibitem[Hartley and Vidal(2008)]{Hartley08}
R.~Hartley and R~Vidal.
\newblock Prespective nonrigid shape and motion recovery.
\newblock In \emph{Proc. Eccv}, 2008.

\bibitem[Hyv\"arinen and Oja(1997)]{Hyvarinen97}
A.~Hyv\"arinen and E.~Oja.
\newblock A fast fixed-point algorithm for independent component analysis.
\newblock \emph{Neural Computation}, 9\penalty0 (7):\penalty0 1483--1492, July
  1997.

\bibitem[Jensen et~al.(2018)Jensen, Del~Bue, Doest, and Aan{\ae}s]{Jensen18}
Sebastian Hoppe~Nesgaard Jensen, Alessio Del~Bue, Mads Emil~Brix Doest, and
  Henrik Aan{\ae}s.
\newblock A benchmark and evaluation of non-rigid structure from motion.
\newblock \emph{arXiv preprint arXiv:1801.08388}, 2018.

\bibitem[Kong and Lucey(2016)]{Kong16}
C.~Kong and S.~Lucey.
\newblock Prior-less compressible structure from motion.
\newblock In \emph{IEEE Conference on Computer Vision and Pattern Recognition
  (CVPR)}, pages 4123--4131, June 2016.

\bibitem[Paladini et~al.(2012)Paladini, {Del Bue}, Xavier, Agapito, Stosic, and
  Dodig]{Paladini12}
Marco Paladini, Alessio {Del Bue}, Joao Xavier, Lourdes Agapito, Marko Stosic,
  and Marija Dodig.
\newblock Optimal metric projections for deformable and articulated
  structure-from-motion.
\newblock \emph{International Journal of Computer Vision (IJCV)}, 96:\penalty0
  252--276, 2012.

\bibitem[Tomasi and Kanade(1992)]{Tomasi92}
C.~Tomasi and T.~Kanade.
\newblock Shape and motion form image streams under orthography: A
  factorization approach.
\newblock \emph{International Journal of Computer Vision (IJCV)}, 9\penalty0
  (2):\penalty0 137--154, November 1992.

\bibitem[Torresani et~al.(2008)Torresani, Hertzmann, and Bregler]{Torresani08}
L.~Torresani, A.~Hertzmann, and C.~Bregler.
\newblock Nonrigid structure-from-motion: Estimating shape and motion with
  hierarchical priors.
\newblock \emph{IEEE Transactions on Pattern Analysis and Machine Intelligence
  (TPAMI)}, 30\penalty0 (5):\penalty0 878--892, May 2008.
\newblock ISSN 0162-8828.
\newblock \doi{10.1109/TPAMI.2007.70752}.

\bibitem[Torresani et~al.(2001)Torresani, Yang, Alexander, and
  Bregler]{Torresani01}
Lorenzo Torresani, Danny~B. Yang, Eugene~J. Alexander, and Christoph Bregler.
\newblock Tracking and modeling non-rigid objects with rank constraints.
\newblock In \emph{IEEE Conference on Computer Vision and Pattern Recognition
  (CVPR)}, volume~I, pages 493--500, 2001.

\bibitem[Xiao et~al.(2006)Xiao, Chai, and Kanade]{Xiao06}
Jing Xiao, Jinxiang Chai, and Takeo Kanade.
\newblock A closed-form solution to non-rigid shape and motion recovery.
\newblock \emph{International Journal of Computer Vision (IJCV)}, 67\penalty0
  (2):\penalty0 233--246, 2006.

\end{thebibliography}
}

\end{document}